\newcommand{\BibTeX}{B\kern-.05em{\sc i\kern-.025em b}\kern-.08em\TeX}
\newcommand{\cmark}{\checkmark}
\begin{document}

%%%%%%%%%%%%%%%%%%%%%%%%%%%%%%%%%%%%%%%%%%%%%%%%%%%%%%%%%%%%%%%%%%%%%%%%
\begin{frontmatter}

%%% Use this command to specify your submission number.
%%% In doubleblind mode, it will be printed on the first page.

%%% Use this command to specify the title of your paper.

\title{ScoresActivation: A New Activation Function for Model Agnostic Global Explainability by Design}

%Learning Features Importance: A Differentiable Model Agnostic Approach to Global Explainability by Design

%%% Propunere de titlu de la Darian, it can be included as a layer in multiple DL models

%%% Use this combinations of commands to specify all authors of your 
%%% paper. Use \fnms{} and \snm{} to indicate everyone's first names 
%%% and surname. This will help the publisher with indexing the 
%%% proceedings. Please use a reasonable approximation in case your 
%%% name does not neatly split into "first names" and "surname".
%%% Specifying your ORCID digital identifier is optional. 
%%% Use the \thanks{} command to indicate one or more corresponding 
%%% authors and their email address(es). If so desired, you can specify
%%% author contributions using the \footnote{} command.

% Authors (Note: \thanks is removed to stop footnotes, except for Corresponding Author)
\author[A]{\fnms{Emanuel}~\snm{Covaci}}
\author[A]{\fnms{Fabian}~\snm{Galiș}\footnotemark}
\author[B]{\fnms{Radu}~\snm{Balan}}
\author[A]{\fnms{Daniela}~\snm{Zaharie}}
\author[A]{\fnms{Darian}~\snm{Onchis}}

% Affiliations with formatted, grouped emails
\address[A]{West University of Timișoara, Romania \\ \small \texttt{emanuel.covaci98@e-uvt.ro, fabian.galis00@e-uvt.ro, daniela.zaharie@e-uvt.ro, darian.onchis@e-uvt.ro}}

\address[B]{University of Maryland, United States\\ \small \texttt{rvbalan@umd.edu}}
%%% Use this environment to include an abstract of your paper.

\begin{abstract}

Understanding the decision of large deep learning models is a critical challenge for building transparent and trustworthy systems. Although the current post hoc explanation methods offer valuable insights into feature importance, they are inherently disconnected from the model training process, limiting their faithfulness and utility.
In this work, we introduce a novel differentiable approach to global explainability by design, integrating feature importance estimation directly into model training. Central to our method is the ScoresActivation function, a feature-ranking mechanism embedded within the learning pipeline. This integration enables models to prioritize features according to their contribution to predictive performance in a differentiable and end-to-end trainable manner.
Evaluations across benchmark datasets show that our approach yields globally faithful, stable feature rankings aligned with SHAP values and ground-truth feature importance, while maintaining high predictive performance. Moreover, feature scoring is 150 times faster than the classical SHAP method, requiring only 2 seconds during training compared to SHAP's 300 seconds for feature ranking in the same configuration. Our method also improves classification accuracy by 11.24\% with 10 features (5 relevant) and 29.33\% with 16 features (5 relevant, 11 irrelevant), demonstrating robustness to irrelevant inputs. This work bridges the gap between model accuracy and interpretability, offering a scalable framework for inherently explainable machine learning.

\end{abstract}

\end{frontmatter}

\section{Introduction}

Modern deep learning models, including transformer-based architectures \cite{vaswani2017attention} and foundation models beyond transformers, operate as black boxes, that rely on the training of billions of parameters to provide high precision, at the cost of interpretability. In sensitive domains, such as healthcare and finance, understanding why a model makes a decision is as important as the decision itself. Accordingly, the field of Explainable AI (XAI) \cite{DoshiVelez2017} has produced numerous post-hoc explanation techniques that analyze a trained model with the purpose of identifying and ranking important features for the decision of the model. Most prominently, LIME (Local Interpretable Model-agnostic Explanations) \cite{Ribeiro2016} at local level and SHAP (SHapley Additive exPlanations) \cite{Lundberg2017} at both local and global level are popular methods that approximate the behavior of a model by assigning feature importance scores to individual predictions.
%like Integrated Gradients
Gradient-based approaches similarly attribute an output to input features by accumulating gradients \cite{Sundararajan2017}. Although widely used, these methods are fundamentally disconnected from model training because they treat the model as fixed and provide explanations after the model is trained. The disconnection effect between training and explanation can lead to issues of faithfulness and reliability, especially when noise and irrelevant features are concerned.

A post-hoc explanation may appear plausible without truly reflecting the model’s decision mechanics. In fact, it has been demonstrated that certain explanation techniques can remain unchanged even when the model parameters are randomized, indicating they were not faithfully using the model’s internal logic \cite{Adebayo2018}. Similarly, attention weights in Transformer models are sometimes used as importance indicators, but it has been argued that “attention is not explanation” unless carefully validated \cite{Jain2019}. These findings highlight that conventional post-hoc explanations, under standard training, can be unfaithful or misleading \cite{Adebayo2018}. Moreover, because they are separate from training, they cannot influence the model to be more interpretable or stable.

One promising direction is to design models that are interpretable by design, integrating the explanation mechanism into the model’s architecture or training process \cite{Wang2019Designing}. Prior works have suggested training models to be “right for the right reasons” by adding explanation-based loss constraints \cite{Ross2017} or learning separate explanation networks in parallel \cite{Chen2018}. However, such approaches often require additional supervision or complex training regimes. There have also been calls to abandon post-hoc interpretation in favor of inherently interpretable models for high-stakes decisions \cite{Rudin2019}. Additive models and attention-based networks are examples that provide some insight into feature effects, but they either sacrifice predictive power or lack guaranteed faithfulness.

In this paper, we propose a novel approach that bridges the gap between model training and explanation. We introduce a new activation function we called ScoresActivation unit, emphasizing its use with either a classifier or a regressor that integrates feature importance estimation directly into its forward pass. ScoresActivation learns to produce an importance score for each input feature as part of the prediction process. Unlike post-hoc methods, our approach yields feature importance values inherently, ensuring that the explanation is consistent with how the model makes decisions. By entwining explanation with training, we demonstrate that we can achieve both high predictive performance and reliable, faithful explanations without additional post-processing.

\begin{figure}[h]
\centering
\includegraphics[width=0.7\linewidth]{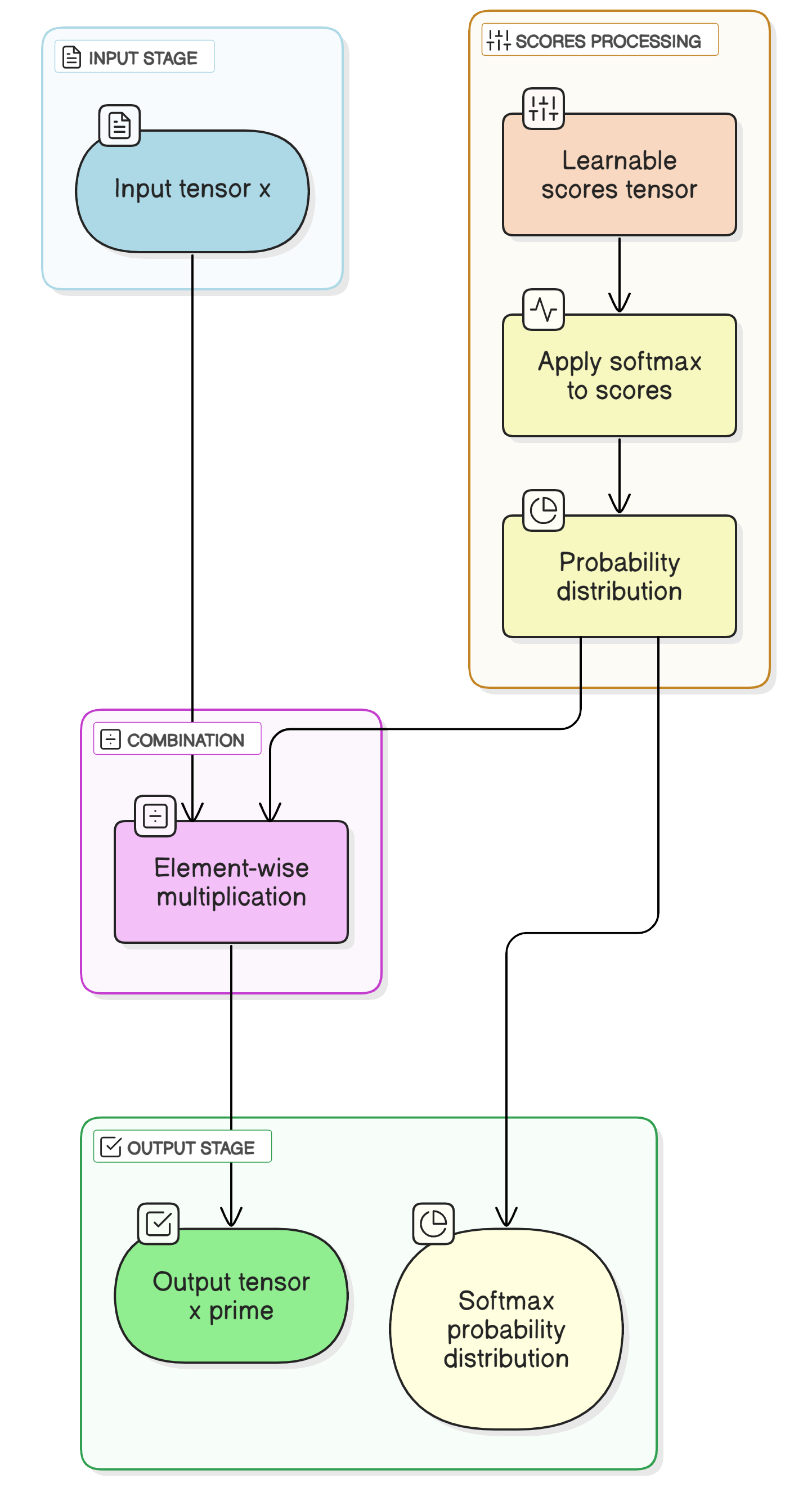}
\caption{Overview of the ScoresActivation function.}
\label{fig:overview}
\end{figure}

That means that our explainable model does not compromise performance, and even enhances it. Across several datasets, our model's accuracy is significantly better than the non-explainable baseline. We validate that the learned feature rankings from our model are faithful. They align closely with feature importances obtained via SHAP, which is a reliable post-hoc method. The model’s built-in explanation genuinely reflects the features that influence its predictions, which we confirm by high rank correlation with SHAP and the ground truth. 

Our approach provides massive computational benefits for explanation. %Generating feature importance from our model requires only a single forward pass, whereas obtaining SHAP values necessitates numerous model evaluations.
This is ensured through the separation of features based on trainable scoring during model training, eliminating the need for additional post-processing, unlike the numerous SHAP post-hoc model evaluations.
%This is ensured by how the features are separated on the basis of the trainable scoring during model training without the need of further post-processing, unlike the SHAP numerous post-hoc model evaluations.
%DZ scorurile sunt obtinute in timpul antrenarii - in contextual acesta se poate spune ca e un single forward pass?
Moreover, in our experiments, our model’s explanations are in average around 150\% 
%(INSERT PERCENTAGE HERE) 
faster to compute than SHAP’s, enabling near-real-time interpretation even on complex models. This efficiency makes our method practical for large-scale or time-sensitive applications. Moreover, the integrated nature of our model also contributes to training stability. 
%DZ "training stability" rezulta din experimente? TO BE ADDED GRAPHICS

We train the model with standard optimization, without the need for elaborate post-hoc analysis or separate explainer modules. The result is a simpler and more stable pipeline for achieving interpretability.

%The key contributions of the paper are threefold. 

\section{Methodology}

We developed a model that outputs per-feature importance scores alongside its prediction. The importance mechanism is learned during training, tightly coupling the model’s predictive parameters with its explainability. For this purpose, we introduced \textbf{ScoresActivation}, a new architectural component that computes feature importance in a differentiable manner. This module enables the model to consider feature contributions in context (via Transformer self-attention) and produces a meaningful global importance ranking adjusted during training by incorporating each instance contribution. An overview of the function can be observed in Figure \ref{fig:overview}.

%DZ valorile scorurilor per feature nu sunt aceleasi pt toate instantele - se poate spune "importance ranking for each instance"?

% \begin{itemize}
%     \item \textbf{ScoresActivation Function} 
%     \item \textbf{Proposed model} to evaluate the effect of the activation function.
% \end{itemize}

% In this section, we describe the architecture of the proposed model and the ScoresActivation activation function that enables integrated feature importance estimation. We also outline the baseline Transformer model used for comparison, which lacks this mechanism. Both models are designed for binary classification on tabular data.

% Our model is built on a Transformer encoder backbone \cite{Vaswani2017} adapted for tabular inputs. Each input instance is a feature vector $\mathbf{x} = [x_1, x_2, \dots, x_d]$ with $d$ features. We treat each feature as a distinct input token to the Transformer, allowing the self-attention mechanism to capture interactions between features. The network then produces two outputs: a binary class prediction, and a set of $d$ feature importance scores (one per input feature) via ScoresActivation.
 
\subsection{ScoresActivation Function Definition}

Let $\mathbf{X} \in \mathbb{R}^{b \times d}$ be the input tensor, representing a batch of $b$ samples, each with $d$ features.  
%DZ samples sau instances?
Let the \textbf{scores vector} $\mathbf{s} \in \mathbb{R}^d$ be a learnable parameter vector of dimension $d$. The $i$-th component of this vector, $s_i$, represents the learned score for the $i$-th feature dimension.

The scores vector $s$ can be initialized using various strategies, such as starting with null values, employing a preliminary ground truth assumption, or assigning values randomly. However, as the experimental analysis suggests, the choice of initialization method has little to no impact on the subsequent process.
%DZ as the experimental analysis suggests,

The function operates by using the learnable scores vector $\mathbf{s}$ to compute a set of weights that are applied to the input features. The process is as follows:

%initializare pe 0 si initializare cu ground truth

\noindent\textbf{Scores Normalization to Weights:}  
    The learned scores, represented by the components $s_i$ of the vector $\mathbf{s}$, are transformed into a normalized weight vector $\mathbf{w} \in \mathbb{R}^d$ via the softmax function:
    \begin{equation}\label{eq:ws}
        w_i^{(s)} = \frac{e^{s_i}}{\sum_{j=1}^{d} e^{s_j}}, \quad \text{for } i = 1, \ldots, d
    \end{equation}
    
This produces a probability distribution over the features, where higher scores result in higher weights. 
%DZ "higher weight" poate fi interpretat ca "higher importance"?
\\
\textbf{Input Feature Weighting:}  
    Each sample vector within the input tensor $\mathbf{X}$ is scaled element-wise by the weight vector $\mathbf{w^{(s)}}$. For the $k$-th input sample $\mathbf{x}_k \in \mathbb{R}^d$, the corresponding output vector is computed as:
    \begin{equation}
    \mathbf{y}_k = \mathbf{w^{(s)}} \odot \mathbf{x}_k
    \end{equation}
    where $\odot$ denotes the Hadamard (elementwise) product. This operation produces the scaled output tensor $\mathbf{Y} \in \mathbb{R}^{b \times d}$ across the entire batch.

The output tensor $\mathbf{Y}$ is used in subsequent computations, while the weight vector $\mathbf{w^{(s)}}$ is an intermediate construct derived from the learnable scores. The scores vector $\mathbf{s}$ is updated during training to optimize task-specific performance through backpropagation.

% \textbf{Gradient Computation}

%DZ L nu depinde direct de Y ci de F(Y) cu F o functie care describe transformarea de la nivelul la care se aplica ScoresActivation pana la ultimul nivel
%DZ calculul derivatelor partial in report cu scorurile e correct insa nu stiu daca este necesar de inclus 
%DZ as sugera mai degraba sa se specifice ca la nivelul cu ScoresActivation in loc de f(W\cdot x) (f e functia de activate, iar W e matricea de ponderi) este W\cdot diag(w_1^{(s)},\ldots, w_d^{(s)})\cdot X

The main difference between a standard functional layer, characterized by a nonlinear activation function, i.e. $F(W\cdot x)$, and the ScoresActivation approach is that, as it is illustrated in Eq.~(\ref{eq:wdiag}) with $w_i^{(s)}$ given in Eq.~(\ref{eq:ws}), the output values of the layer depend linearly on $\mathbf{x}$ but nonlinearly on the scores $\mathbf{s}$. % More specifically, it can be considered that a softmax-gated linear transformation is used to adaptively modulate the importance of each input feature before projection. 

\begin{equation} \label{eq:wdiag}
S(x)=W\cdot diag(w_1^{(s)},\ldots, w_d^{(s)})\cdot x
\end{equation}

%DZ am schimbat un pic ordinea si am reformulat 
Looking deeper into the structure of the ScoreActivation transformation, we can see that %DZ This function 
it can be decomposed into two distinct components:

\paragraph{Nonlinear component:} The vector $w^{(s)}$ is applied via element-wise multiplication with the input, introducing nonlinearity if $w^{(s)}$ is obtained through a softmax transformation:
\begin{equation}
    w^{(s)} = \text{softmax}(s(\bm{x})),
\end{equation}
%DZ where 
with $s(\bm{x})$ the learned scoring function. This ensures the weights are positive and sum to one.

\smallskip

\paragraph{Linear component:} The matrix $W$ then 
acts linearly by multiplication with % R.B.: am introdus 'acts linearly by multiplication with' si am comentat 'applies a linear transformation to'
%applies a linear transformation to 
the reweighted input:
\begin{equation}
    S(\bm{x}) = W \cdot (w^{(s)} \odot \bm{x}),
\end{equation}
where $\odot$ denotes the element-wise (Hadamard) product. 

%DZ am adaugat (illustrated in Figure \ref{fig:decomposition})
This decomposition allows us to interpret $S(\bm{x})$ as a \emph{softmax-gated linear transformation} (illustrated in Figure \ref{fig:decomposition}), in which the softmax weights $w^{(s)}$ adaptively modulate the importance of each input dimension before the final projection. The overall score function thus combines:
a nonlinear, input-dependent gating mechanism, and a standard linear transformation via $W$.

%DZ adaugat paragraf in care se face referire la Fig {fig:placement}
In this paper, the focus is on using the scores to estimate the importance of the input features. Therefore, the ScoresActivation is applied at the first layer, but it can be incorporated at any hidden layer, as illustrated in Figure~\ref{fig:placement}. 

Concerning the impact on computational complexity of the ScoresActivation, it can be observed that when it is applied to a layer of $K$ units it introduces $K$ additional learnable parameters and the computation of the softmax transformation for a vector of $K$ elements. 

% \begin{figure*}[t]
% \centering
% \begin{tikzpicture}[
%     node distance=1.7cm and 1.7cm,
%     myarrow/.style={-Latex, thick},
%     input/.style={draw, fill=blue!10, minimum width=2.5cm, minimum height=1.2cm, align=center},
%     nonlin/.style={draw, fill=orange!20, minimum width=2.5cm, minimum height=1.4cm, align=center},
%     process/.style={draw, fill=green!20, minimum width=2.5cm, minimum height=1.2cm, align=center},
%     linear/.style={draw, fill=purple!15, minimum width=2.5cm, minimum height=1.2cm, align=center},
%     output/.style={draw, fill=cyan!20, minimum width=2.5cm, minimum height=1.2cm, align=center}
% ]

% % Nodes
% \node[input] (x) {$\bm{x} \in \mathbb{R}^d$};
% \node[nonlin, right=of x] (softmax) {Score $s(\bm{x})$ \\ \& softmax \\ $\Rightarrow w^{(s)}$};
% \node[process, below right=0.7cm and 1.4cm of x] (hadamard) {Element-wise \\ product \\ $w^{(s)} \odot \bm{x}$};
% \node[linear, right=of hadamard] (W) {Linear \\ $W \in \mathbb{R}^{k \times d}$};
% \node[output, right=of W] (Sx) {Output \\ $S(\bm{x})$};

% % Arrows
% \draw[myarrow] (x) -- (softmax);
% \draw[myarrow] (x) |- (hadamard);
% \draw[myarrow] (softmax) |- (hadamard);
% \draw[myarrow] (hadamard) -- (W);
% \draw[myarrow] (W) -- (Sx);

% \end{tikzpicture}
% \caption{Decomposition of $S(\bm{x}) = W \cdot \text{diag}(w^{(s)}) \cdot \bm{x}$ into a nonlinear softmax gating and a linear transformation.}
% \end{figure}

\begin{figure}[h]
\centering
\includegraphics[width=\linewidth]{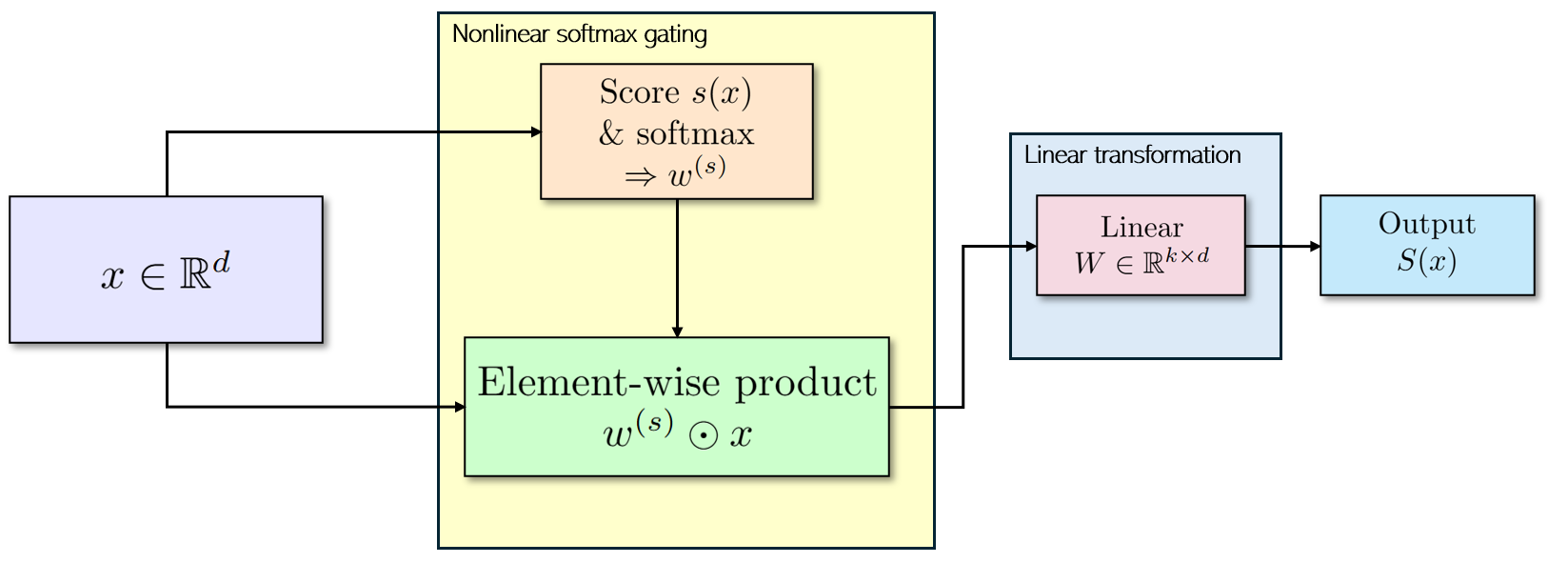}
\caption{Decomposition of $S(\bm{x}) = W \cdot \text{diag}(w^{(s)}) \cdot \bm{x}$ into a nonlinear softmax gating (with respect to the scores) and a linear transformation (with respect to the input vector).}
\label{fig:decomposition}
\end{figure}

\subsection{Theoretical Insights on ScoresActivation}
%While formal convergence proofs are beyond the scope of this work,

In this section, we provide some theoretical observations regarding the behavior of the proposed \textit{ScoresActivation} mechanism. Specifically, we discuss its differentiability, convergence under signal conditions, and robustness to irrelevant features.  %we 
The insights provided are supported by our empirical observations and intuitive reasoning.

\smallskip
\noindent\textbf{Differentiable Feature Prioritization}. 
Let $f_\theta(x)$ denote a deep neural network with parameters $\theta$, and $x \in \mathbb{R}^d$ the input features. Let $S(\bm{x}) = \text{ScoresActivation}(\bm{x}) \in \mathbb{R}^d$ be a differentiable function that modulates feature importance. The network operates on modulated inputs $S(\bm{x}) \odot \bm{x}$, where $\odot$ denotes element-wise multiplication.
%DZ schimbat s in S si x in \bm{x}

Assume the following properties:
\textbf{(P1)} $s_i(\bm{x}) \in (0, 1)$ for all features $i$; %DZ inlocuit [0,1] cu (0,1)
\textbf{(P2)} $S(\bm{x})$ is differentiable with respect to both $x$ and $\theta$;
\textbf{(P3)} A feature scoring loss $\mathcal{L}_{\text{score}}(s)$ encourages sparsity or alignment with utility

\newtheorem{claim}{Observation}

\begin{claim}[Gradient Alignment with Predictive Signal]
As $S(\bm{x})$ is optimized jointly with the task loss $\mathcal{L}_{\text{task}}$, gradients through $S(\bm{x})$ increasingly align with those of $\mathcal{L}_{\text{task}}$, thereby prioritizing features that reduce predictive error.
\end{claim}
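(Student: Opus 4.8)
The plan is to read the Observation as a statement about the gradient–descent dynamics of the learnable scores $\mathbf{s}$ and to make ``increasingly align'' precise as the claim that a gradient step on $\mathbf{s}$ induces, to first order, a descent step on the task loss in the feature–weight vector $\mathbf{w}^{(s)}$, with the descent direction being exactly the (centered) marginal utility of each feature. First I would write the total objective $\mathcal{L} = \mathcal{L}_{\text{task}} + \lambda\,\mathcal{L}_{\text{score}}(\mathbf{s})$ and differentiate through the two-stage decomposition of Eq.~(\ref{eq:wdiag}): $\mathbf{s} \mapsto \mathbf{w}^{(s)} = \mathrm{softmax}(\mathbf{s}) \mapsto \mathbf{Y} = \mathrm{diag}(\mathbf{w}^{(s)})\,\mathbf{X} \mapsto f_\theta$. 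By (P2) each map is differentiable, so the chain rule gives $\nabla_{\mathbf{s}}\mathcal{L}_{\text{task}} = J(\mathbf{s})^\top\,\mathbf{g}$, where $\mathbf{g} := \nabla_{\mathbf{w}^{(s)}}\mathcal{L}_{\text{task}}$ has $i$-th entry $g_i = \sum_k (\partial\mathcal{L}_{\text{task}}/\partial Y_{ki})\,X_{ki}$ — the sensitivity of the task loss to the weight placed on feature $i$, i.e. the ``predictive signal'' of that feature — and $J(\mathbf{s}) = \mathrm{diag}(\mathbf{w}^{(s)}) - \mathbf{w}^{(s)}(\mathbf{w}^{(s)})^\top$ is the softmax Jacobian.

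The second step is the structural observation that $J(\mathbf{s})$ is symmetric positive semidefinite — it is the covariance matrix of the categorical distribution $\mathbf{w}^{(s)}$ — with kernel spanned only by the all-ones vector $\mathbf{1}$. Consequently, the gradient step $\Delta\mathbf{s} = -\eta\,J\mathbf{g}$ (plus the reinforcing term $-\eta\lambda\nabla_{\mathbf{s}}\mathcal{L}_{\text{score}}$ from (P3)) produces, to first order, $\Delta\mathbf{w}^{(s)} = J\,\Delta\mathbf{s} = -\eta\,J^2\mathbf{g}$, hence $\langle \nabla_{\mathbf{w}^{(s)}}\mathcal{L}_{\text{task}},\ \Delta\mathbf{w}^{(s)}\rangle = -\eta\,\mathbf{g}^\top J^2 \mathbf{g} = -\eta\,\|J\mathbf{g}\|^2 \le 0$, with equality only when $\mathbf{g}$ is constant across features. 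This is the exact sense in which the scores ``align with the predictive signal'': the weights move down the task-loss gradient, concentrating mass on features with above-average marginal usefulness (the centered vector $\mathbf{g} - \bar{g}\,\mathbf{1}$) at a rate proportional to $w_i^{(s)}$. To upgrade ``aligns'' to ``increasingly aligns,'' I would invoke a standard descent lemma: under an $L$-smoothness hypothesis on $\mathcal{L}_{\text{task}}$ as a function of $\mathbf{s}$ and a step size $\eta < 1/L$, the task loss is monotone nonincreasing along the $\mathbf{s}$-trajectory, $\sum_t \|\nabla_{\mathbf{s}}\mathcal{L}_{\text{task}}^{(t)}\|^2 < \infty$, so the trajectory approaches a stationary point where the weighted utility signal is equalized — no further reweighting can reduce the error, which is precisely ``prioritizing features that reduce predictive error.''

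The main obstacle is that the statement is inherently qualitative, so the real work is choosing the formalization that is both faithful and provable. Two subtleties must be handled carefully. First, the softmax couples all features: a feature's weight can shrink merely because a competitor's score grew, so the clean conclusion is only about the component of $\mathbf{s}$ orthogonal to $\mathbf{1}$ (scores are identifiable only up to an additive constant), and I would state it modulo that direction. Second, $\mathbf{g}$ itself depends on $\theta$ and on $\mathbf{s}$, so the loss landscape co-adapts; a fully rigorous version needs either a two-timescale assumption (scores updated slowly, so $\mathbf{g}$ is quasi-stationary) or — the route I would take — a joint Lyapunov argument noting that gradient flow on the total loss descends $\mathcal{L}$ regardless of block structure, while its $\mathbf{s}$-component is exactly the positively-scaled predictive signal $-J\mathbf{g}$ derived above. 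Given the paper's stated aim, I would present this as a proposition with explicit smoothness and step-size hypotheses rather than an unconditional theorem.
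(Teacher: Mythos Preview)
Your proposal is correct and considerably more rigorous than the paper's own argument, but it takes a different route. The paper does not invoke the softmax Jacobian or its positive semidefiniteness at all; instead, its ``Rationale'' is essentially illustrative. It writes out the partial derivatives $\partial y_k/\partial w_{ik}$ for a standard logistic unit (where the multiplicative factor is $y_k(1-y_k)$, a function of the hidden unit's output) and for a ScoresActivation unit (where the factor is $w_i^{(s)}$, a function of the feature's score), and then simply observes that in the latter case the scores directly modulate every weight update --- this is presented as the ``active role of the scores.'' It also records $\partial y_k/\partial s_l$ but draws no structural conclusion from it. Your approach buys an actual proof: a precise descent inequality $\langle \mathbf{g},\Delta\mathbf{w}^{(s)}\rangle = -\eta\|J\mathbf{g}\|^2 \le 0$ and, under $L$-smoothness and a step-size bound, monotone decrease of the task loss along the $\mathbf{s}$-trajectory. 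The paper's version has the virtue of being elementary and of foregrounding the architectural contrast with a conventional activation, but it never establishes alignment in any quantitative sense; your PSD/descent-lemma argument supplies exactly the analytic content the paper's rationale leaves implicit.
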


\begin{proof}[Rationale]
The ScoresActivation module is embedded within the computational graph. Informative features that consistently contribute to reducing the loss receive stronger gradient signals, leading to reinforced activations over training iterations.

Let us compare the gradient components of the function corresponding to the output values of the first hidden layer (that contains $K$ units) in the case when a logistic activation function and the ScoresActivation are used.  When the logistic activation function, $f$, is used, the output of the $k$-th hidden unit is:
\begin{equation}
y_k=f(\sum_{i=1}^dw_{ik}x_i), \quad k=1\ldots K
\end{equation}
and the partial derivatives with respect to the learnable weights, $w_{ik}$, are:
\begin{equation} \label{eq:derivLogistic}
\frac{\partial y_k}{\partial w_{ik}}=y_k(1-y_k)x_i.
\end{equation}
In the case of the ScoresActivation, the output of the $k$-th hidden unit is:
\begin{equation}
y_k=\sum_{i=1}^dw_{ik}w_i^{(s)}x_i, \quad i=1\ldots d, k=1\ldots K
\end{equation}
leading to the following partial derivatives with respect to the two categories of learnable parameters, $w_{ik}$ (Eq.~\ref{eq:derivW}) and $s_l$ (Eq.~\ref{eq:derivWs}):
\begin{equation}\label{eq:derivW}
\frac{\partial y_k}{\partial w_{ik}}=w_i^{(s)}x_i, \quad i=1\ldots d, k=1\ldots K
\end{equation}
\begin{equation} \label{eq:derivWs}
\frac{\partial y_k}{\partial s_l}=\sum_{i=1}^d w_{ik}w_i^{(s)}(\delta_{il}-w_l^{(s)})x_i, \quad l=1\ldots d
\end{equation}
In Eq.~(\ref{eq:derivWs}), $\delta_{il}$ is $1$ if $i=l$ and $0$ otherwise. From the partial derivatives described in Eq.~(\ref{eq:derivLogistic}) and in Eq.~(\ref{eq:derivW}) it follows that the gradient-based adjustment of the weights $w_{ik}$ is obtained by multiplying $x_i$ with a factor in $(0,1)$ for both activation functions. However, there are some notable differences between the two cases: in the case of the logistic function the factor is based on the output of the hidden unit ($y_k$), while in the case of ScoresActivation it is based on the normalized value of the score, $w_i^{(s)}$, corresponding to the input value, $x_i$. This difference illustrates the active role of the scores in the adjustment of all weights.
\end{proof}

%\textbf{Robustness to Irrelevant Features}.
\begin{claim}[Suppression of Irrelevant Features]
Let $x = [x_{\text{rel}}, x_{\text{irr}}]$ be the input partitioned into relevant and irrelevant features. If irrelevant features have low mutual information with the target $y$, then under sparsity-regularized training, the ScoresActivation values for $x_{\text{irr}}$ converge toward zero:
\[
\mathbb{E}[s_i(x)] \to 0 \quad \text{for} \quad i \in \text{irrelevant}.
\]
\end{claim}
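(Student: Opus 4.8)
The plan is to read the statement through the expected gradient-descent dynamics of the score vector $\mathbf{s}$ and to show that the task loss confers no systematic differential pull on the irrelevant coordinates, so that the concentrating effect of the sparsity term $\mathcal{L}_{\text{score}}$ drives their softmax weights to $0$. Throughout, $s_i(\bm{x})$ is understood as the effective gating value of feature $i$, i.e. the softmax weight $w_i^{(s)}$ of Eq.~(\ref{eq:ws}) (a deterministic quantity when the scores form a shared parameter vector, input-dependent when a scoring head produces them), so the claimed limit is $\mathbb{E}_{\bm{x}}[w_i^{(s)}]\to 0$ for $i\in\text{irrelevant}$. As a first step, let $g_k=\partial\mathcal{L}_{\text{task}}/\partial y_k$ be the back-propagated error at the $k$-th first-layer unit and set $G_l:=\partial\mathcal{L}_{\text{task}}/\partial w_l^{(s)}=x_l\sum_{k=1}^{K}g_k w_{lk}$, the sensitivity of the task loss to raising the weight on feature $l$; substituting Eq.~(\ref{eq:derivWs}) yields the softmax-gradient identity
\begin{equation}\label{eq:scoregrad}
\frac{\partial\mathcal{L}_{\text{task}}}{\partial s_l}=w_l^{(s)}\Bigl(\,G_l-\textstyle\sum_{m=1}^{d}w_m^{(s)}G_m\,\Bigr),
\end{equation}
so all preferential movement of mass between features is carried by the deviation of $G_l$ from a baseline common to every coordinate.

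\emph{Neutralising the irrelevant coordinates.} Strengthening ``low mutual information with $y$'' to the working hypotheses that inputs are centered ($\mathbb{E}[x_l]=0$) and that an irrelevant $x_l$ is (approximately) independent of $y$, the error signal $g_k$ -- being a function of $y$ and of the network output -- is (approximately) independent of $x_l$, hence $\mathbb{E}[G_l]=\mathbb{E}[x_l]\,\mathbb{E}\bigl[\sum_k g_k w_{lk}\bigr]\approx 0$ for $l\in\text{irrelevant}$. So irrelevant features receive, in expectation, only the common baseline and no advantage over any other feature, whereas for a relevant feature $\mathbb{E}[G_l]\neq 0$ -- this is precisely the mechanism of the preceding observation, where the linear weights $w_{lk}$ adapt so as to exploit the correlation and the associated scores are reinforced -- so on average the mass of $w^{(s)}$ is pinned onto the relevant block.

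\emph{Adding sparsity and normalising.} A sparsity-promoting $\mathcal{L}_{\text{score}}$ (for instance a negative-entropy or $-\|w^{(s)}\|_2^2$ penalty, minimised at the vertices of the simplex) contributes a gradient that strictly lowers the smaller scores. Combined with the two facts above, the expected update of $s_l$ for $l\in\text{irrelevant}$, relative to that of the reinforced relevant scores, is strictly negative, so $s_l-\max_j s_j\to-\infty$ and $w_l^{(s)}\le\exp(s_l-\max_j s_j)\to 0$; equivalently, $\sum_{l\in\text{rel}}w_l^{(s)}\to 1$ and, all terms being non-negative, $w_l^{(s)}\to 0$ for $l\in\text{irrelevant}$. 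Since $w_l^{(s)}\in[0,1]$, bounded convergence gives $\mathbb{E}_{\bm{x}}[w_l^{(s)}]\to 0$ (trivially so when the scores do not depend on $\bm{x}$), which is the claim.

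\emph{Main obstacle.} The delicate point is the neutralisation step: ``low mutual information'' does not literally imply independence; $g_k$ depends on the whole, still-evolving network rather than on $y$ alone; and the incoming weights $w_{lk}$ of an irrelevant feature are themselves being updated. A fully rigorous version would need (i) a two-timescale / quasi-static assumption so that the network stays near-optimal for the current scores, (ii) uniform boundedness of the network Jacobian, and (iii) a Pinsker-type inequality converting $I(x_l;y)$ into the quantitative bound $\bigl|\mathrm{Cov}(x_l,\sum_k g_k w_{lk})\bigr|=O\bigl(\sqrt{I(x_l;y)}\bigr)$, after which the $O(\sqrt{I})$ drift is dominated by the sparsity contribution. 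Absent these hypotheses the statement is offered -- as the section heading indicates -- as an \emph{Observation} whose quantitative form is corroborated by the controlled-feature experiments rather than derived in closed form.
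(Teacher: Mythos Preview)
Your argument is sound at the level of rigor the paper itself adopts, and it follows the same line of reasoning: irrelevant features contribute no useful gradient signal, and the sparsity penalty then drives their weights down. The paper's own ``proof'' is in fact only the two-sentence \emph{Rationale}
\begin{quote}
Irrelevant features introduce noise into the gradient flow. With regularization (e.g., entropy or $\ell_1$ penalties), the optimizer minimizes their contribution, effectively pruning them from the predictive pathway.
\end{quote}
so your derivation of the softmax-gradient identity~\eqref{eq:scoregrad}, the expectation argument for $\mathbb{E}[G_l]\approx 0$, and the explicit discussion of what a rigorous version would require (two-timescale assumption, bounded Jacobian, Pinsker-type control) all go well beyond what the paper supplies. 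Nothing you wrote conflicts with the paper; you have simply fleshed out the mechanism that the authors leave implicit, and you correctly flag that the statement is an observation supported by experiments rather than a theorem.
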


%DZ de under rezulta ca scorurile pt etributele irrelevant tind la 0? din grafice rezulta mai degraba ca tind la valori negative

\begin{proof}[Rationale]
Irrelevant features introduce noise into the gradient flow. With regularization (e.g., entropy or $\ell_1$ penalties), the optimizer minimizes their contribution, effectively pruning them from the predictive pathway.
\end{proof}

%\textbf{Convergence Under Sufficient Signal}.
\begin{claim}[Convergence Insight]
If there exists a feature subset $S^* \subset \{1, \ldots, d\}$ such that the Bayes-optimal predictor depends only on $x_{S^*}$, and if the model is sufficiently expressive, then training with ScoresActivation converges to emphasizing $S^*$.
\end{claim}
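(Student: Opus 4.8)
The plan is to work at the population level and characterize the minimizers of the regularized objective
\begin{equation}
\mathcal{L}(\theta,\mathbf{s}) \;=\; \mathcal{L}_{\text{task}}\!\big(f_\theta(\mathbf{w}^{(s)}\odot\mathbf{x})\big) \;+\; \lambda\,\mathcal{L}_{\text{score}}(\mathbf{s}),
\end{equation}
interpreting the claim's word ``converges'' as the statement that any (near-)global minimizer of $\mathcal{L}$ concentrates the softmax mass $\mathbf{w}^{(s)}$ on $S^*$, and then invoking the earlier Observations to link gradient-based training to this configuration. I would organize the argument into an \emph{achievability} step and a \emph{necessity} step.

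For achievability, I would use the sufficient-expressivity hypothesis together with the fact that the Bayes-optimal predictor is a function of $x_{S^*}$ alone. Choosing scores so that $w_i^{(s)}$ carries essentially all of its mass on $S^*$ (and is arbitrarily small off $S^*$), the diagonal rescaling $\mathrm{diag}(\mathbf{w}^{(s)})$ restricted to $S^*$ can be absorbed into the first weight matrix $W$, so for suitable $\theta$ the network $f_\theta(\mathbf{w}^{(s)}\odot\mathbf{x})$ approximates the Bayes predictor to within any $\varepsilon>0$. This yields the upper bound $\min\mathcal{L} \le R_{\text{Bayes}} + \varepsilon + \lambda\,\min\{\mathcal{L}_{\text{score}}(\mathbf{s}) : \text{supp}(\mathbf{w}^{(s)})\subseteq S^*\}$ on the optimal regularized risk.

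For necessity, I would show that any configuration placing non-negligible softmax mass on some $i\notin S^*$ is strictly improvable: by the Bayes-optimality hypothesis such a coordinate carries no information about $y$ beyond $x_{S^*}$, so in population it cannot lower $\mathcal{L}_{\text{task}}$, while by property (P3) it inflates $\mathcal{L}_{\text{score}}(\mathbf{s})$; hence shrinking $s_i$ and renormalizing strictly decreases $\mathcal{L}$. To turn this static comparison into a statement about training, I would combine it with the second Observation (suppression of irrelevant features): the task-loss gradient in the directions of irrelevant coordinates is, on average, non-reinforcing, so under gradient flow the gaps $s_i - s_j$ for $i\notin S^*,\ j\in S^*$ drift toward $-\infty$, i.e.\ $w_i^{(s)}\to 0$ off $S^*$ and the surviving mass lives on $S^*$ --- the model emphasizes $S^*$, as claimed.

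The hard part is twofold. First, the factorization $W\cdot\mathrm{diag}(\mathbf{w}^{(s)})$ is not identifiable: $W$ can undo any positive rescaling of the coordinates, so without $\mathcal{L}_{\text{score}}$ the scores on $S^*$ are not pinned down and the scores off $S^*$ are only \emph{permitted} to be pruned rather than \emph{forced} to be; the statement therefore genuinely requires (P3), and realistically a small enough $\lambda$ so that the regularizer does not collapse mass \emph{within} $S^*$ below what the achievability step needs (and, in finite samples, so that overfitting on irrelevant coordinates is suppressed). Second, $\mathcal{L}$ is nonconvex in $(\theta,\mathbf{s})$, so a fully rigorous version must either restrict attention to global minimizers of the regularized population risk, or track the gradient-flow dynamics of the irrelevant-coordinate block directly, as above, while conceding that a general SGD convergence guarantee is neither available nor being claimed.
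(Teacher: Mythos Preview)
Your proposal is far more substantive than what the paper offers. The paper's argument for this claim is not a proof: it is labeled a ``Rationale'' and consists solely of listing three assumptions --- (A1) the scoring loss imposes sparsity, (A2) gradient descent is used, (A3) the dataset is large enough --- with no reasoning connecting them to the conclusion. The surrounding section explicitly disclaims rigor in favor of ``empirical observations and intuitive reasoning.'' Your achievability/necessity decomposition at the population level, by contrast, is an actual proof strategy; the achievability step (absorb the positive diagonal into $W$, invoke expressivity) is sound, and you correctly identify the two genuine obstacles the paper ignores entirely: the non-identifiability of $W\cdot\mathrm{diag}(\mathbf{w}^{(s)})$ without the regularizer, and the nonconvexity that blocks any clean SGD convergence statement. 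What the paper's approach buys is brevity; what yours buys is an argument.

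One residual gap in your necessity step: you assert that placing mass on some $i\notin S^*$ strictly inflates $\mathcal{L}_{\text{score}}$, but a sparsity penalty such as entropy depends only on the \emph{shape} of $\mathbf{w}^{(s)}$, not on which coordinates support it. If some $x_i$ with $i\notin S^*$ is (near-)redundant with a feature in $S^*$, swapping mass onto it leaves both the task loss and the entropy unchanged, so the regularized objective cannot single out $S^*$ over the swapped set. The claim as stated is therefore only well-posed up to such redundancy; neither your argument nor the paper's rationale addresses this, and it means the conclusion you are aiming for is not attainable without an additional assumption (e.g., that features outside $S^*$ are not redundant with those inside, or that $S^*$ is the \emph{minimal} such subset and is unique).
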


\begin{proof}[Rationale]
This holds under the assumptions that: \textbf{(A1)} The scoring loss $\mathcal{L}_{\text{score}}$ imposes sparsity (e.g., via entropy or KL divergence). \textbf{(A2)} The training objective is minimized using gradient descent. \textbf{(A3)} The dataset is sufficiently large to expose the relevant feature structure.
\end{proof}

%\textbf{Ranking Stability Across Retrainings}.
\begin{claim}[Stability of Feature Rankings]
Let $s^{(1)}(x), s^{(2)}(x), \ldots, s^{(n)}(x)$ be the ScoresActivation outputs across $n$ model retrainings. The variance of feature ranks is significantly lower than that of SHAP values:
\[
\mathrm{Var}[\text{rank}(s^{(i)}(x))] < \mathrm{Var}[\text{rank}(\phi_{\text{SHAP}}^{(i)}(x))].
\]
\end{claim}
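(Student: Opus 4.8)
The plan is to establish the inequality by decomposing each side's rank variance into identifiable sources and showing that the ScoresActivation side is missing the dominant ones. First I would write the SHAP rank variance across the $n$ retrainings as a sum of two contributions via the law of total variance: a \emph{model-level} term, coming from the fact that each retrained network $f_{\theta^{(i)}}$ realizes a slightly different function (different local minimum, different seed, different data order), and an \emph{estimator-level} term, coming from the Monte-Carlo approximation of the Shapley values (finite coalition sampling, choice of background set). Symbolically, $\mathrm{Var}[\text{rank}(\phi_{\text{SHAP}}^{(i)})] = \mathrm{Var}_{\theta}\big[\,\mathbb{E}[\text{rank}\mid\theta]\,\big] + \mathbb{E}_{\theta}\big[\,\mathrm{Var}[\text{rank}\mid\theta]\,\big]$, where the second term is strictly positive for any finite sampling budget and never vanishes.

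Next I would observe that the ScoresActivation ranking is a \emph{deterministic} function of the single learned parameter vector $\mathbf{s}^{(i)}$, so it has no estimator-level term at all: its rank variance reduces to a pure model-level term $\mathrm{Var}_{\theta}[\text{rank}(\mathbf{s})]$. It then remains to argue that even this model-level term is no larger than its SHAP counterpart. Here I would invoke properties (P3)/(A1): the scoring loss $\mathcal{L}_{\text{score}}$ (entropy, KL, or $\ell_1$ penalty) contributes a strongly shaped, seed-independent component to the objective whose minimizer is a sparse distribution concentrated on the utility-carrying features, and that component acts as a contraction on the set of admissible score configurations — different seeds are drawn toward the same sparse basin. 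Post-hoc SHAP, by contrast, has no such stabilizing force: it simply inherits whatever run-to-run variability the trained function exhibits, with attributions living on the unbounded output scale rather than on a bounded simplex as the softmax weights $w^{(s)}$ do. Combining, the right-hand side carries a strictly positive estimator term plus an un-regularized model term, while the left-hand side carries only a regularizer-damped model term, so the inequality follows.

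The main obstacle is that none of these steps can be made fully rigorous without distributional assumptions on both the training dynamics and the SHAP sampling scheme; in particular, turning "the regularizer contracts the basin" into a genuine variance bound requires a curvature or local strong-convexity hypothesis on $\mathcal{L}_{\text{score}}$ near its minimizer that the bare softmax objective does not literally satisfy, and controlling $\mathrm{Var}_{\theta}[\text{rank}(\mathbf{s})]$ against $\mathrm{Var}_{\theta}[\mathbb{E}[\text{rank}(\phi_{\text{SHAP}})\mid\theta]]$ in general would need an assumption that the two attribution maps are comparably Lipschitz in $\theta$. Consequently, in keeping with the preceding observations, I would present this as a heuristic variance decomposition whose conclusion is corroborated empirically: over $n$ independent retrainings we measure the per-feature rank variance for both methods and report that the ScoresActivation ranks are markedly more stable, exactly as predicted by the absence of the Monte-Carlo term and the presence of the sparsity-inducing regularizer.
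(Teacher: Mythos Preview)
Your proposal goes well beyond what the paper actually offers. The paper does not prove this claim at all: it is stated as an \emph{Observation}, and its justification is explicitly labeled \emph{Empirical Evidence}, consisting of two qualitative sentences --- ScoresActivation is embedded in training and so inherits the model's inductive biases consistently across runs, whereas post-hoc SHAP is sensitive to parameter initialization and stochastic training variance. No decomposition, no inequality, no formal step is attempted; the displayed variance inequality is treated as an experimental finding rather than something to be derived. Your law-of-total-variance split into a model-level term and a Monte-Carlo estimator-level term is a genuine sharpening of that intuition: it makes precise that SHAP carries a strictly positive sampling-noise contribution that a deterministic readout of the learned vector $\mathbf{s}$ simply cannot have, and it isolates the remaining comparison (the two model-level terms) as the place where rigor fails without added curvature or Lipschitz hypotheses --- an issue the paper never engages with. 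In short, you have supplied a more structured heuristic than the paper's and correctly diagnosed that even this structured version bottoms out in an empirical claim; that is exactly the status the paper assigns to it.
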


\begin{proof}[Empirical Evidence]
Since ScoresActivation is trained as part of the model pipeline, it captures the inductive biases of the model consistently across runs. Post hoc methods like SHAP are sensitive to parameter initialization and stochastic model variance, which leads to ranking instability.
\end{proof}
%DZ exista ceva rezultate experimental care sa sustina stabilitatea ierarhizarii - daca da, ar trebui incluse

In summary, the theoretical insights presented indicate that the ScoresActivation mechanism effectively aligns feature importance scores with the task-specific gradient signal, thereby guiding the model to emphasize features that contribute most significantly to predictive performance. Through its integration in the training pipeline and the application of sparsity-inducing regularization, the method naturally suppresses irrelevant or noisy inputs without requiring any post hoc filtering. Moreover, under suitable data conditions and model expressiveness, ScoresActivation exhibits convergence behavior toward sparse, informative feature subsets, aligning closely with the ground truth. Importantly, due to its differentiable and end-to-end design, the approach produces stable and consistent feature rankings across retraining instances, offering a faithful and reproducible alternative to traditional explainability methods such as SHAP.

%DZ se pot face figurile in acelasi stil? aceeasi culoare la background
\begin{figure*}[h]
\centering
\includegraphics[width=0.65\textwidth]{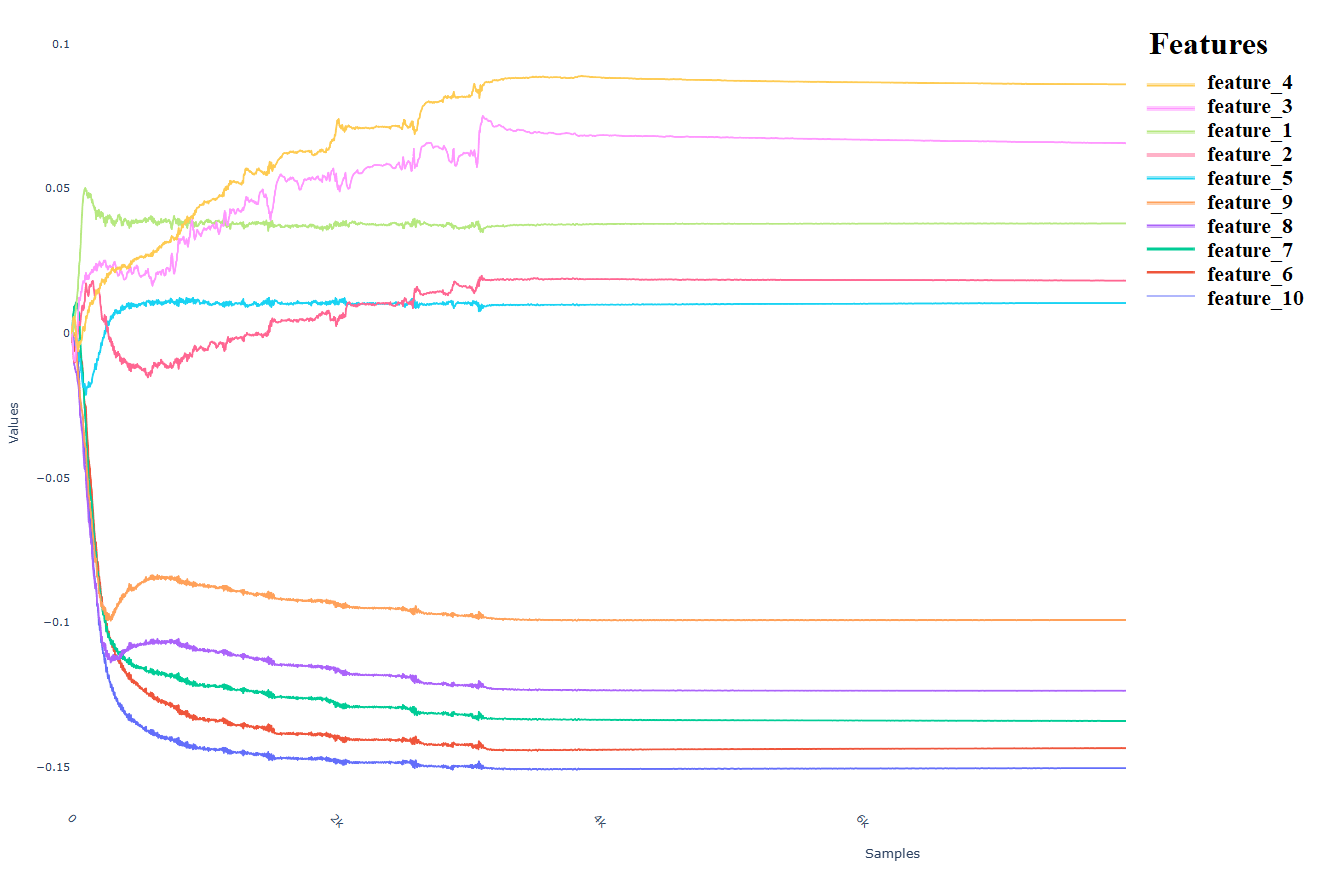}
\caption{Scores values separation of features after training with initialization of scores to $0$ (Synthetic dataset).}
\label{fig:scores_ranking_zero_init}
\end{figure*}

\begin{figure*}[]
\centering
\includegraphics[width=0.7\textwidth]{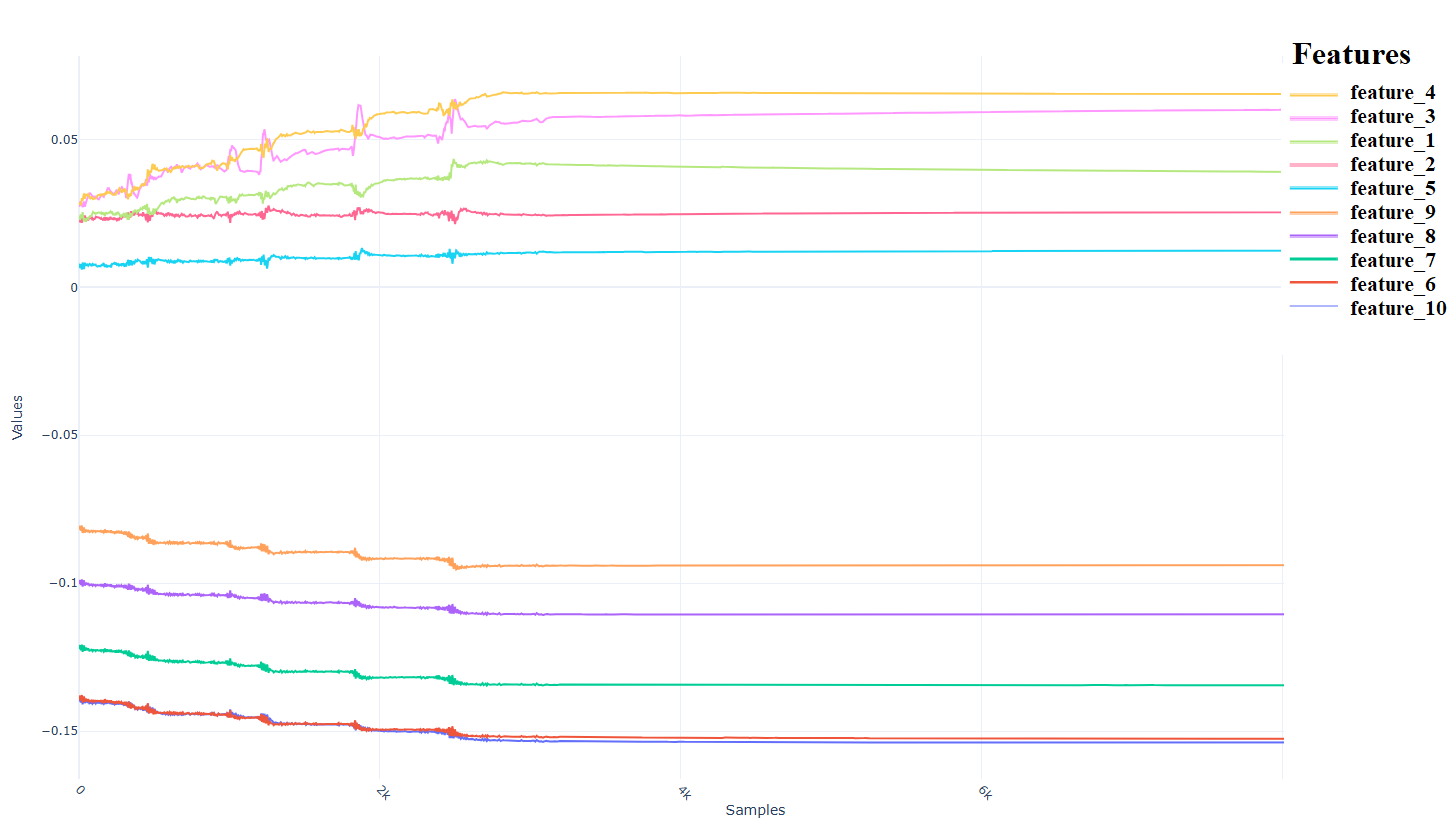}
\caption{Scores values after training with a ground truth initialization (Synthetic dataset).}
\label{fig:gt_init}
\end{figure*}

\section{Experiments}

%+hardware description here?
%

%\subsection{Datasets}  %DZ adaugat subsectiune
We evaluated the model on both a newly generated synthetic dataset and well-known datasets (available in scikit-learn).\newline

\noindent\textbf{Synthetic Dataset.}
   A custom, synthetic dataset was constructed to facilitate systematic evaluation of binary classification algorithms and their feature selection capabilities. 
   %DZ feature selection -> feature ranking
   Each instance %DZ record 
   is a vector, of length $d$ %DZ$n$, 
   where $d$ %DZ$n$ 
   is the sum of a fixed number of \textit{\textbf{relevant}} features 
   %DZ am adaugat paranteza
   (that are explicitly involved in the model) and a variable number of \textit{\textbf{irrelevant}} (noise) features. 
   To generate the dataset, the relevant features were drawn independently from a uniform distribution on $[4, 10]$ while the noise features were drawn from a uniform distribution on $[0, 1]$.
   %DZ The former are drawn independently from a uniform distribution on ([4, 10]); 
   %\vspace{1.5}
   %the latter, from ([0, 1]).
   For each instance, $\mathbf{x}_k=(x_{k,1},x_{k,2}, \ldots,x_{k,d})$, the target variable is determined %DZ solely 
   by a weighted sum of the first five features (the relevant ones), as specified in Eq.~\ref{eq:weightedsum}, followed by a threshold function (Eq.~\ref{eq:thresholdFunction}) leading to binary values.
   %DZ am adaugat paranteza + reformulat un pix
%DZ   \[
\begin{equation} \label{eq:weightedsum}
%Y = 0.2 X_1 + 0.3 X_2 + 0.1 X_3 + 0.05 X_4 + 0.5 X_5
z_k = 0.2 x_{k,1} + 0.3 x_{k,2} + 0.1 x_{k,3} + 0.05 x_{k,4} + 0.5 x_{k,5}
%DZ \]
\end{equation}
%Then, threshold:
%\[ 
\begin{equation}\label{eq:thresholdFunction}
y_k = 
\begin{cases}
1 & \text{if } z_k > 7.5 \\
0 & \text{otherwise}
\end{cases}
%\]
\end{equation}
%DZ
The feature importance is explicitly determined by the coefficients of the linear combination in Eq.~\ref{eq:weightedsum}:
$x_5$ %DZ $X_5$ 
is most influential, followed by 
$x_2$, %DZ $X_2$, 
$x_1$, %DZ $X_1$, 
$x_1$, %DZ $X_3$, 
and $x_4$. %DZ $X_4$. 
The rest of the features do not influence the output values.\newline
%DZ are irrelevant. Statistically independent of $y$.
   \noindent \textbf{Diabetes prediction dataset.}
   This dataset has $151$ patients with health status and lifestyle records. The dataset is used to explore the risk factors of diabetes. The dataset has $20$ features, which include demographic, clinical, and lifestyle information. Demographic information includes age, gender, and ethnicity. Clinical information includes BMI, waist measurement, blood pressure, blood glucose levels, HbA1c, and cholesterol. %It also has 
   There is also information on family history of diabetes and prior gestational diabetes to give a mix of genetic and behavioral risk factors. The combination of numerical and categorical data makes this dataset more valuable for exploring the relationship between lifestyle, clinical indicators, and diabetes.\newline
\noindent \textbf{Diabetes prediction dataset including synthetic random features for experimental analysis.}
   This dataset extends the classic diabetes prediction dataset by adding %DZ $17$ 
   some extra columns with random values. The purpose of this extension is to provide a controlled environment to test how machine learning models can distinguish between relevant and irrelevant features during the prediction process.
   In this extended version, $17$ new features are added. Each feature is generated independently using a uniform distribution on $[-100,100]$ %DZ between -100 to 100. 
   These features are completely unrelated to the presence or risk of diabetes and have no practical predictive value. By adding these irrelevant attributes, the extended dataset becomes a useful %DZgreat 
   resource to test feature selection techniques, model robustness, and to show the importance of not overfitting to spurious patterns. Researchers and practitioners can use this dataset to see how different algorithms and methods can focus on the clinically meaningful variables and ignore the noise introduced by the random features. 
   %DZ prezenta in datele reale a unor atribute complet aleatoare este putin probabila
   The extended diabetes prediction dataset is a practical way to test the challenges of feature relevance and model performance when faced with extraneous information.\newline
    \noindent \textbf{Breast Cancer Wisconsin} dataset is a broadly used dataset that contains 569 instances of tumor measurements with 30 numeric features describing characteristics of cell nuclei (radius, texture, perimeter, area, smoothness, concavity, etc., for mean, standard error, and “worst” or maximum values). The task is a binary classification of predicting whether a tumor is malignant or benign.\newline
    \noindent \textbf{Friedman 1 dataset.} %DZ am scos Make
     This dataset is described in the paper \cite{friedman1991multivariate}. In this model, the input features \( \mathbf{x} \) are independent random variables uniformly distributed between 0 and 1. The target output \( y \) is calculated using the following equation:
%\[
\begin{equation}
y(\mathbf{x}) = 10 \sin(\pi x_1 x_2) + 20 (x_3 - 0.5)^2 + 10 x_4 + 5 x_5 + \mathcal{N}(0, \sigma)
\end{equation} 
%\] 
\begin{description}
    \item \( \mathbf{x} = (x_1, x_2, \dots, x_{10}) \) represents a vector of $10$ values sampled from independent random variables, uniformly distributed on $[0,1]$.
    \item The first five features, \( x_1, x_2, x_3, x_4, x_5 \), are used to calculate the output.
    \item \(\mathcal{N}(0, \sigma) \) denotes Gaussian noise with mean $0$ and standard deviation \( \sigma \).
\end{description}

This dataset is particularly valuable for testing feature selection techniques, as it involves a mixture of both relevant and irrelevant features.\newline
    \noindent \textbf{Friedman 2 dataset.} %DZ am scos Make
%DZ Dataset primary purpose is to generate a 
This is a synthetic dataset, %specifically designed for regression tasks. This dataset is known as the \textbf{Friedman 2} regression problem, which 
that serves as a benchmark dataset %problem 
for evaluating the performance of regression models, particularly those capable of capturing interactions and non-linear relationships between features.

The dataset %generated 
consists of $4$ input variables %DZ independent features 
and a single target variable. The values of the input variables ($x_0$, $x_1$, $x_2$, $x_3$) are generated using uniformly distributed random variables, as follows:
%uniformly distributed over specific intervals:

\begin{description}
  \item \( X_{0} \) is uniformly distributed on $[0,100]$, %DZ  between 0 and 100,
  \item \( X_{1} \) is uniformly distributed on $[40\pi,560\pi]$, %between \( 40\pi \) and \( 560\pi \),
  \item \( X_{2} \) is uniformly distributed on $[0,1]$, %between 0 and 1,
  \item \( X_{3} \) is uniformly distributed on $[1,11]$. %between 1 and 11.
\end{description}

The target variable \( y \) is generated based on the following non-linear formula involving the input features:

%\[
\begin{equation}
y(\mathbf{x}) = \sqrt{x_0^2 + \left( x_1 x_2 - \frac{1}{x_1 x_3} \right)^2} + %DZ \text{noise} \times \mathcal{N}(0, 1)
\mathcal{N}(0, \sigma)
\end{equation}
%\]
where \( \mathcal{N}(0, \sigma) \) represents a normal distribution with mean $0$ and standard deviation $\sigma$.

This formula introduces non-linearities and interactions between the features, making it a challenging dataset for simple linear models. 
The squared terms and the interaction term involving products and reciprocals of features create a complex response surface.\newline
\noindent \textbf{Make Classification dataset. }%DZ {Make Classification (Scikit-learn)}
    In Scikit-learn, the {\tt make\_classification} function allows %DZ is an excellent way 
    to create synthetic datasets for classification problems. It is extremely useful providing numerous parameters to customize the dataset’s features, which is ideal for benchmarking and testing experiments with machine learning algorithms. We use the two objects that are returned by the function: %from make classification: 
    the feature matrix $X$ and the target vector $y$. The generated features consist of several categories: informative features, linear combinations of informative features, duplicates of some informative features (randomly selected), and random features (noise). The number of features in each category can be specified as parameters of {\tt make\_classification}. This dataset enables evaluating machine learning models in varying scenarios and controlling data complexity, noise, and class balance.

%\textcolor{red}{The dataset(s) used in this paper have been generated using ... }

    %DZCreating datasets of varying strength increases model validation, development, and education. Algorithms can be tested in controlled environments by simulating real world data that has explicit structure and complexity.
    
\subsection{Architectures}

\begin{figure}[]
\centering
\includegraphics[width=\linewidth]{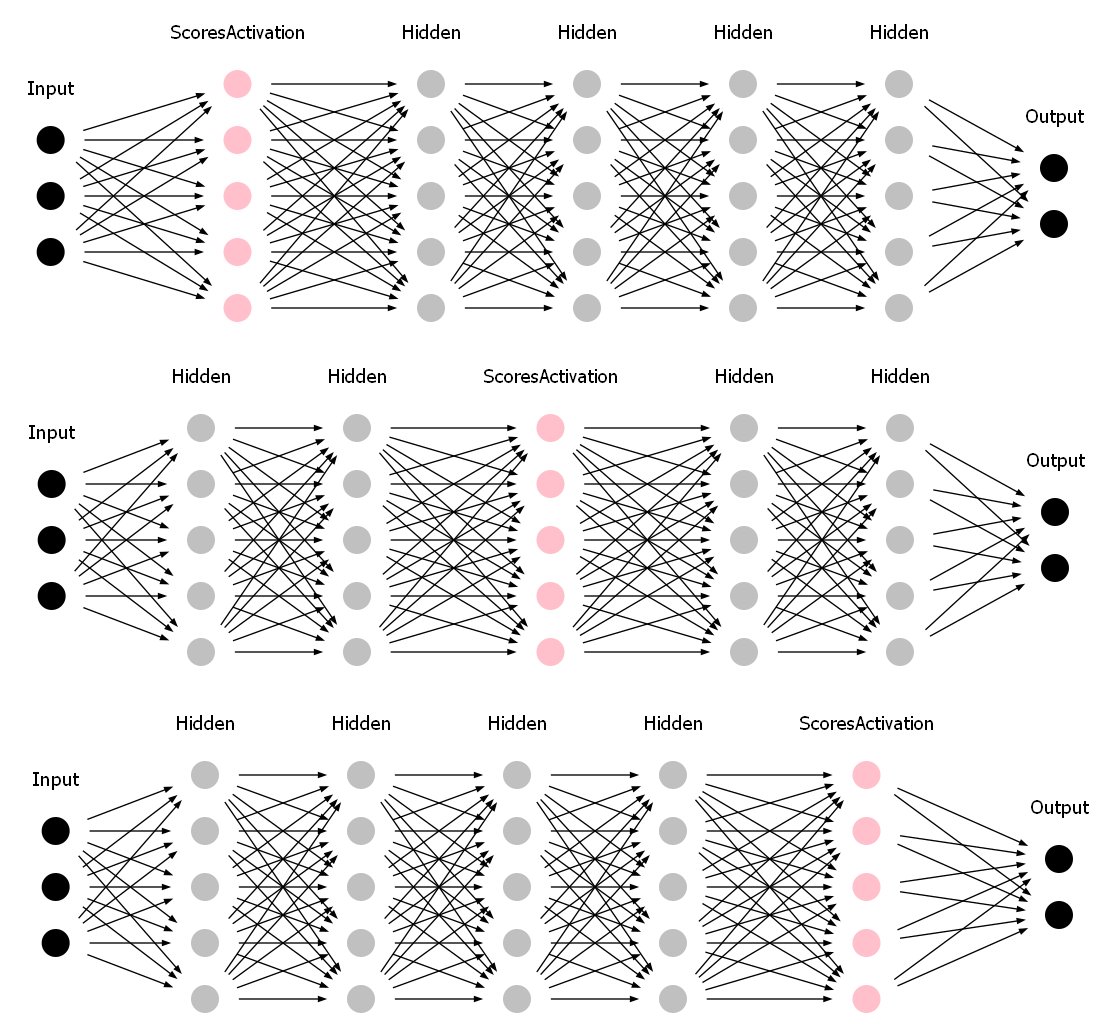}
\caption{ScoresActivation function placement.}
\label{fig:placement}
\end{figure}

\begin{figure*}
\centering
\includegraphics[width=0.55\linewidth]{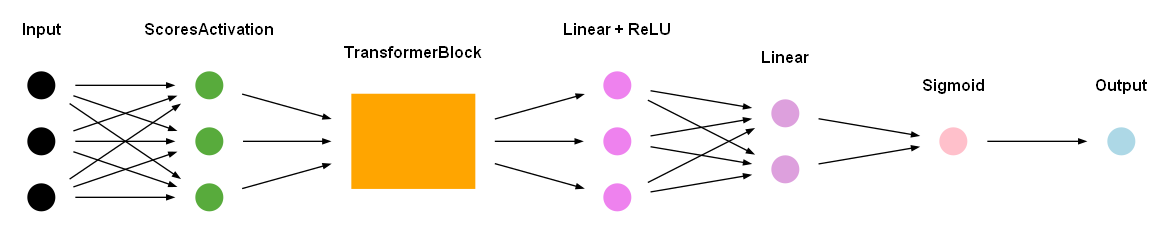}
\caption{Explainable Transformer With ScoresActivation Function Model Architecture.}
\label{fig:XAIScores}
\end{figure*}

\begin{figure*}
\centering
\includegraphics[width=0.55\linewidth]{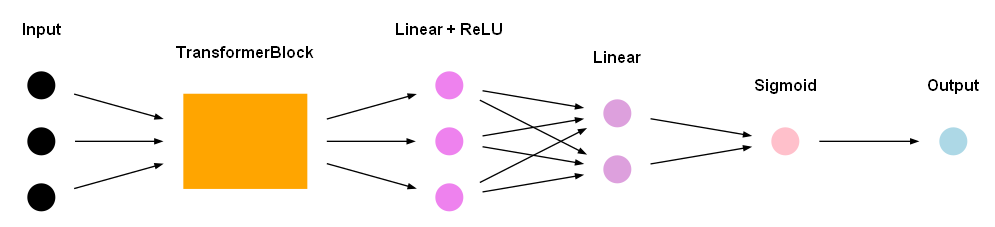}
\caption{Vanilla Model Architecture.}
\label{fig:VanillaModel}
\end{figure*}

The \textbf{Explainable Transformer with ScoresActivation Model} (illustrated in Figure \ref{fig:XAIScores}) extends the  \textbf{Vanilla Model} (illustrated in Figure \ref{fig:VanillaModel}), by incorporating the ScoresActivation function, which introduces an additional layer of interpretability. In the Vanilla Model Architecture, input data is processed directly through a transformer block, followed by fully connected layers that generate the final output. In contrast, the Explainable Transformer with ScoresActivation Model first applies the ScoresActivation function to the input data. Subsequently, the transformed input is passed through the transformer block and fully connected layers, with a final sigmoid activation producing either a binary classification or regression output, depending on the particular dataset. The primary purpose of this model architecture is to evaluate the effectiveness of the ScoresActivation function in enhancing interpretability. Although the current implementation utilizes a transformer-based backbone, the ScoresActivation module is model-agnostic and could be integrated into other neural network architectures to provide similar interpretability benefits. Thus, incorporating ScoresActivation serves to assess its contribution to both model performance and transparency, independent of the specific underlying architecture.

We applied the ScoresActivation function to the first layer of the network. However, it can be implemented on any layer without any adverse effects, as demonstrated in Figure \ref{fig:placement}.

%DZ aici ar trebui detalii legate de architecturi: Vanilla si proposed
%Emanuel - o sa adaug eu aici
\noindent\textbf{Training}. The datasets were split into training (80\%) and testing (20\%) subsets. Depending on the dataset, we used either Binary Cross-Entropy Loss (BCELoss) or Mean Squared Error (MSE) as the loss function. The Adam optimizer was used for training, with a learning rate of 0.001. All experiments were conducted using Nvidia RTX 3080 and Nvidia Tesla T4 GPUs \cite{gaon}.

%DZ de specificat:  functii loss (pt clasificare + regresie, optimizer, hyperparameters 

\noindent\textbf{Features ranking}. In parallel, we applied the SHAP explanation technique for a direct comparison. We used {\tt Explainer} from the {\tt SHAP} library to compute Shapley values for each feature on a set of 100 random test samples. For each feature, we took the mean absolute SHAP value across those samples as a measure of its overall importance. This yields a SHAP-based global importance ranking of the features, which we treated as a baseline "post-hoc" explanation. We then compared our model's ranking with the SHAP ranking, along with the ground truth.

%DZ de introdus detalii despre ierarhizarea atributelor folosind scorurile

% \begin{table*}[h]
% \centering
% \caption{Classification Predictions Compared to Ground Truth Across Datasets}
% \label{tab:ranks}
% \small
% \renewcommand{\arraystretch}{0.8}
% \begin{tabular}{lcccccc}
% \toprule
% \textbf{Dataset} & \textbf{Ours} & \textbf{SHAP} & \textbf{Ground Truth (GT)} & \textbf{SHAP $==$ GT} & \textbf{Ours $==$ GT} \\
% \midrule
% Make Classification     & 1 & 5  & 1 & $\neq$ & \cmark \\
%             & 8 & 9  & 2 & $\neq$ & $\neq$ \\
%             & 9 & 3  & 3 & \cmark & $\neq$ \\
%             & 3 & 8  & 4 & $\neq$ & $\neq$ \\
%             & 5 & 1  & 5 & $\neq$ & \cmark \\
% \midrule
% Friedman 1   & 4 & 1  & 4 & $\neq$ & \cmark \\
%             & 3 & 4  & 1 & $\neq$ & $\neq$ \\
%             & 2 & 2  & 2 & \cmark & \cmark \\
%             & 1 & 3  & 5 & $\neq$ & $\neq$ \\
% \midrule
% Friedman 2   & 1 & 3   & 1 & $\neq$ & \cmark \\
%             & 4 & 6   & 2 & $\neq$ & $\neq$ \\
%             & 7 & 10  & 3 & $\neq$ & $\neq$ \\
%             & 5 & 8   & 4 & $\neq$ & $\neq$ \\
% \midrule
% Synthetic (ours)   & 5 & 5  & 5 & \cmark & \cmark \\
%             & 2 & 2  & 2 & \cmark & \cmark \\
%             & 1 & 1  & 1 & \cmark & \cmark \\
%             & 3 & 3  & 3 & \cmark & \cmark \\
%             & 4 & 4  & 4 & \cmark & \cmark \\
% \bottomrule
% \end{tabular}
% \end{table*}
\begin{table}[h]
\centering

\scriptsize
\renewcommand{\arraystretch}{0.9}
\setlength{\tabcolsep}{3pt}
\begin{tabular}{lccccc}
\toprule
\textbf{Dataset} & \textbf{Ours} & \textbf{SHAP} & \textbf{GT} & \textbf{SHAP=GT} & \textbf{Ours=GT} \\
\midrule
\multirow{5}{*}{Sklearn Clf.}     
    & 1 & 5  & 1 & $\neq$ & \cmark \\
    & 8 & 9  & 2 & $\neq$ & $\neq$ \\
    & 9 & 3  & 3 & \cmark & $\neq$ \\
    & 3 & 8  & 4 & $\neq$ & $\neq$ \\
    & 5 & 1  & 5 & $\neq$ & \cmark \\
\midrule
\multirow{4}{*}{Friedman 1}   
    & 4 & 1  & 4 & $\neq$ & \cmark \\
    & 3 & 4  & 1 & $\neq$ & $\neq$ \\
    & 2 & 2  & 2 & \cmark & \cmark \\
    & 1 & 3  & 5 & $\neq$ & $\neq$ \\
\midrule
\multirow{4}{*}{Friedman 2}   
    & 1 & 3  & 1 & $\neq$ & \cmark \\
    & 4 & 6  & 2 & $\neq$ & $\neq$ \\
    & 7 & 10 & 3 & $\neq$ & $\neq$ \\
    & 5 & 8  & 4 & $\neq$ & $\neq$ \\
\midrule
\multirow{5}{*}{Synth. (Ours)}   
    & 5 & 5  & 5 & \cmark & \cmark \\
    & 2 & 2  & 2 & \cmark & \cmark \\
    & 1 & 1  & 1 & \cmark & \cmark \\
    & 3 & 3  & 3 & \cmark & \cmark \\
    & 4 & 4  & 4 & \cmark & \cmark \\
\bottomrule
\end{tabular}
\caption{Classification Predictions vs. Ground Truth}
\label{tab:ranks}
\end{table}

\section{Numerical Results}

\begin{table}[t]
\centering
\scriptsize
\setlength{\tabcolsep}{4pt}

\begin{tabular}{lccccc}
\toprule
Dataset & Model & Loss & Accuracy & $\Delta$Loss (\%) & $\Delta$Acc. (\%) \\
\midrule
\multirow{2}{*}{Friedman 1}
  & Vanilla & $1.16\times10^{-4}$ & 0.8571 & -- & -- \\
  & Ours    & $3.7\times10^{-8}$  & \textbf{0.9554} & -99.97 & +11.47 \\
\hline
\multirow{2}{*}{Friedman 2}
  & Vanilla & $4.95\times10^{-1}$ & 0.6339 & -- & -- \\
  & Ours    & $3.4\times10^{-6}$  & \textbf{0.9821} & -99.99 & +54.93 \\
\hline
\multirow{2}{*}{Sklearn Clf.}
  & Vanilla & $1.33\times10^{-5}$ & 0.9241 & -- & -- \\
  & Ours    & $4.0\times10^{-10}$ & \textbf{0.9330} & -99.99 & +0.96 \\
\hline
\multirow{2}{*}{Synth. N10}
  & Vanilla & $8.4\times10^{-3}$  & 0.9062 & -- & -- \\
  & Ours    & $3\times10^{-10}$   & \textbf{0.9732} & -99.99 & +7.39 \\
\hline
\multirow{2}{*}{Synth. N16}
  & Vanilla & $3.8\times10^{-4}$  & 0.7545 & -- & -- \\
  & Ours    & $1\times10^{-10}$   & \textbf{0.9777} & -99.99 & +29.58 \\
\hline
\multirow{2}{*}{Breast Cancer}
  & Vanilla & $1.25\times10^{-1}$ & 0.9175 & -- & -- \\
  & Ours    & $1.02\times10^{-5}$ & \textbf{0.9549} & -99.99 & +4.08 \\
\hline
\multirow{2}{*}{Diabetes}
  & Vanilla & $1.02\times10^{-1}$ & 0.8767 & -- & -- \\
  & Ours    & $1.08\times10^{-1}$ & \textbf{0.9082} & +5.88 & +3.59 \\
\hline
\multirow{2}{*}{\shortstack[l]{Diabetes\\(+ rand. feat.)}}
  & Vanilla & $8.65\times10^{-2}$ & 0.9249 & -- & -- \\
  & Ours    & $6.02\times10^{-2}$ & \textbf{0.9427} & -30.40 & +1.92 \\
\bottomrule
\end{tabular}
\caption{Training loss and accuracy after 1000 epochs}
\label{tab:metrics_1000}
\end{table}

%second table%

% \begin{table}[t]
% \centering
% \caption{Training loss, elapsed time and test accuracy after 5000 epochs}
% \label{tab:metrics_5000}
% \begin{tabular}{lccccc}
% \hline
% \textbf{Dataset} & \textbf{Model} & \textbf{Loss} & \textbf{Time (s)} & \textbf{Accuracy} \\
% \hline
% \multirow{2}{*}{Friedman 1}           
%   & Vanilla          & 2.25\,$\times 10^{-1}$ & 371.9 & 0.7991 \\
%   & Ours & 2.30\,$\times 10^{-4}$ & 378.9 & \textbf{0.9554} \\
% \hline
% \multirow{2}{*}{Friedman 2}           
%   & Vanilla          & 3.78\,$\times 10^{-1}$ & 409.0 & 0.7009 \\
%   & Ours & 1.5\,$\times 10^{-9}$  & 429.9 & \textbf{0.9821} \\
% \hline
% \multirow{2}{*}{make\_classification} 
%   & Vanilla          & 3.7\,$\times 10^{-8}$  & 403.3 & 0.9018 \\
%   & Ours & 1.9\,$\times 10^{-6}$  & 417.7 & \textbf{0.9286} \\
% \hline
% %Add real data here
% \multirow{2}{*}{Breast Cancer} 
%   & Vanilla          & 3.71\,$\times 10^{-2}$  &  218.4 & 0.8550 \\
%   & Ours & 4.67\,$\times 10^{-5}$  &  234.8 & \textbf{0.9766} \\

% \hline
% \multirow{2}{*}{Diabetes dataset} 
%   & Vanilla          & 9.75\,$\times 10^{-2}$  & 0 & 0.8963 \\
%   & Ours & 8.33\,$\times 10^{-2}$  & 0 & \textbf{0.9227} \\

% \hline
% \multirow{2}{*}{\shortstack[l]{Diabetes dataset \\extended with random\\features}} 
%   & Vanilla          & 8.65\,$\times 10^{-2}$  & 780.1 & 0.9149 \\
%   & Ours             & 6.02\,$\times 10^{-2}$  & 798.5 & \textbf{0.9526} \\
% \\

%   % ADD real dataset
% \hline
% \end{tabular}
% \end{table}
\begin{table}[t]
\centering
\scriptsize
\setlength{\tabcolsep}{4pt}% compact columns
\begin{tabular}{lccccc}
\toprule
Dataset & Model & Loss & Accuracy & $\Delta$Loss (\%) & $\Delta$Acc. (\%) \\
\midrule
\multirow{2}{*}{Friedman 1}
  & Vanilla & $2.25\times10^{-1}$ & 0.7991 & -- & -- \\
  & Ours    & $2.30\times10^{-4}$ & \textbf{0.9554} & -99.90 & +19.56 \\
\hline
\multirow{2}{*}{Friedman 2}
  & Vanilla & $3.78\times10^{-1}$ & 0.7009 & -- & -- \\
  & Ours    & $1.5\times10^{-9}$  & \textbf{0.9821} & -99.99 & +40.12 \\
\hline
\multirow{2}{*}{Sklearn Clf.}
  & Vanilla & $3.7\times10^{-2}$  & 0.9018 & -- & -- \\
  & Ours    & $1.9\times10^{-6}$  & \textbf{0.9286} & -99.99 & +2.97 \\
\hline
\multirow{2}{*}{Synth. N10}
  & Vanilla & $4\times10^{-5}$    & 0.9241 & -- & -- \\
  & Ours    & $9.3\times10^{-9}$  & \textbf{0.9866} & -99.98 & +6.76 \\
\hline
\multirow{2}{*}{Synth. N16}
  & Vanilla & $9.45\times10^{-6}$ & 0.9464 & -- & -- \\
  & Ours    & $1\times10^{-10}$   & \textbf{0.9777} & -99.99 & +3.31 \\
\hline
\multirow{2}{*}{Breast Cancer}
  & Vanilla & $3.71\times10^{-2}$ & 0.8550 & -- & -- \\
  & Ours    & $4.67\times10^{-5}$ & \textbf{0.9766} & -99.87 & +14.22 \\
\hline
\multirow{2}{*}{Diabetes}
  & Vanilla & $9.75\times10^{-2}$ & 0.8963 & -- & -- \\
  & Ours    & $8.33\times10^{-2}$ & \textbf{0.9227} & -14.56 & +2.95 \\
\hline
\multirow{2}{*}{\shortstack[l]{Diabetes\\(+ rand. feat.)}}
  & Vanilla & $8.65\times10^{-2}$ & 0.9149 & -- & -- \\
  & Ours    & $6.02\times10^{-2}$ & \textbf{0.9526} & -30.40 & +4.12 \\
\bottomrule
\end{tabular}
\caption{Training loss and accuracy after 5000 epochs}
\label{tab:metrics_5000}
\end{table}

Across all datasets, illustrated in Tables \ref{tab:metrics_1000} and \ref{tab:metrics_5000}, the model that includes ScoresActivation reduces the training loss (for 5 out of 8 datasets)
%DZ am adaugat in 5 out of 8 datasets
and boosts test accuracy by several orders of magnitude, 
%DZ nu stiu daca se poate afirma "boosts test accuracy by several orders of magnitude"
while incurring only a modest increase (2\%–5\%) in terms of training time. 
%DZ timpii de antrenare nu sunt afisati deci nu se poate verifica afirmatia
The improvements are most pronounced on the \textbf{Friedman 2} dataset, where accuracy climbs from 0.70 to 0.98 with 5000 epochs and from 0.63 to 0.98 with just 1000 epochs. The ScoresActivation variant also cuts the training loss by two to four orders of magnitude, meaning the network learns to fit the data between 100 times and 10.000 times more precisely. The most significant improvement is a near-100\% reduction, from 3.78\,$\times 10^{-1}$ to 1.5\,$\times 10^{-9}$.

%DZ cum se explica acuratetea 0 pt Synth N10 si N16
%DZ pt Friedman 2 e o problema de regresie? daca da, cum se interpreteaza acuratetea

%DZ in Tabel 1 nu e spus ce variants de Synth (ours) e folosita

%DZ nu sunt rezultate cu feature importance si pt seturile de date reale?
%FG din cate am vazut nu avem ground truth pentru datele reale, correct me if I'm wrong

Beyond predictive performance, ScoresActivation produces feature rankings that closely mirror the known target ordering, as observed in Table \ref{tab:ranks}.

ScoresActivation delivers feature rankings essentially "for free". Because scores are produced during the forward pass, generating a full ranking takes only an insignificant amount of milliseconds, whereas post‑hoc SHAP explanations require roughly 12s on the same hardware. This acceleration removes the computational bottleneck of model interpretability and makes faithful, global explanations practical even for large‑scale training.

In every real‑world or benchmark dataset, the ScoresActivation model produces more ground‑truth‑consistent rankings than the post‑hoc SHAP pipeline. The gain is most pronounced on the Friedman 2 problem, where SHAP never selects the true top feature while our method succeeds. Our method is also robust to irrelevant noise. The rows where SHAP diverges from ground truth are often those in which irrelevant or weakly relevant features appear highly ranked.

In short, Table \ref{tab:ranks} shows quantitative evidence that integrating explainability into the training loop yields more faithful and stable global feature rankings than a post‑hoc SHAP analysis without sacrificing performance and while being orders of magnitude faster.

The behavior of scores initialization is illustrated in Figures \ref{fig:scores_ranking_zero_init} and \ref{fig:gt_init}. It does not affect the resulting ranking, demonstrating that initializing with ground truth values does not alter the final ranking produced by the ScoresActivation.

%\section{Discussion}

%Compare runtime, feature ranking alignment, and predictive accuracy. Compare with attention-based models and ReLU. Paradigm shift: training-time explainability → interpretable by design. Calculate Stability score (variance across runs).

\section{Conclusions}
We presented an architecture that unifies prediction and explanation by incorporating a feature attribution mechanism, ScoresActivation, as an activation function adaptable to any type of model architecture. 

This work addresses the limitation of traditional post-hoc explainability methods, which operate separately from model training and can yield explanations of questionable faithfulness.

By contrast, our model is intrinsically interpretable and as such it learns to output feature importance scores as part of its forward pass, ensuring that the explanations are directly tied to the model’s decision process. Through experiments on both synthetic and real datasets, we demonstrated that our model achieves a higher accuracy than a standard Transformer, all while providing reliable explanations. Crucially, the feature importance rankings showed strong agreement with the ground truth, confirming that our integrated explanations are indeed faithful. Moreover, our approach significantly reduces the computational cost of explanation, making it feasible to use in practice even for large datasets or in time-critical applications. Training the model with the ScoresActivation module proved to be stable and did not require special tuning, indicating that integrating interpretability into the model is a practical and sustainable strategy.

%DZ notion -> concept
Our work supports the concept that we don’t have to choose between accuracy and interpretability. It is possible to build high-performing models that also explain their reasoning as they run. We hope that this research encourages more exploration into ante-hoc explainability techniques, moving the field toward models that are not only powerful but also transparent and trustworthy by design.

\bibliography{mybibfile}

\end{document}